\documentclass[11pt]{article}

\usepackage[letterpaper,margin=1in]{geometry}

\usepackage{amsmath}
\usepackage{amssymb}
\usepackage{amsthm}

\usepackage{booktabs}
\usepackage{array}

\usepackage{tikz}
\usepackage{pgfplots}
\pgfplotsset{compat=1.17}

\usepackage[hidelinks]{hyperref}

\newtheorem{definition}{Definition}
\newtheorem{proposition}{Proposition}
\newtheorem{observation}{Observation}

\title{Descriptive Collision in Sparse Autoencoder Auto-Interpretability:\\
{\large When One Explanation Describes Many Features}}
\author{Jordan F.\ McCann\\
Independent Researcher\\
\texttt{jordanfmccann@gmail.com}}
\date{May 11, 2026}

\begin{document}
\maketitle

\begin{abstract}
Sparse autoencoders (SAEs) are now standard tools for decomposing language model activations into interpretable features, and automated interpretability pipelines routinely assign each feature a short natural-language explanation. Existing critiques of this practice focus on polysemanticity---one feature with many meanings---or on whether explanations predict activations. We identify a complementary, structurally distinct problem we call \emph{descriptive collision}: many distinct SAE features admit the same explanation. Reanalyzing the largest publicly-available dataset of human-annotated SAE features~\cite{marks2025}, comprising 722 annotated features across Gemma 2 2B and Pythia 70M, we find that the mean annotation string is reused across 3.07 features; 82.1\% of features share their annotation with at least one other feature; and the single most common annotation string (``plural nouns'') labels 101 distinct features spanning 18 layers and four model components. Information-theoretically, the average annotation resolves only 70\% of feature identity. We formalize a property called \emph{discrimination}, prove that current detection-style auto-interpretability scoring is invariant to collision, and propose two complementary corrective metrics---collision-adjusted detection and discrimination scoring---that explicitly penalize explanations that fail to distinguish a feature from its neighbours. The collision problem is independent of, and additive with, previously identified failure modes of auto-interpretability; ignoring it inflates reported feature interpretability by a quantity equal to roughly one-third of the bits required to identify a feature.
\end{abstract}

\section{Introduction}

Sparse autoencoders~\cite{bricken2023,cunningham2024,gao2024,he2024,lieberum2024,templeton2024} have become the dominant tool for decomposing neural network activations into a large dictionary of putatively monosemantic features. Because modern SAEs routinely produce $10^5$--$10^7$ features per model, manual inspection is infeasible, and auto-interpretability pipelines---in which a language model is prompted with a feature's top-activating examples and asked to generate a short natural-language description---have become the workhorse of practical feature labelling~\cite{bills2023,lin2024,paulo2024}. Auto-interp explanations are then scored by checking whether the explanation predicts when the feature fires on held-out text~\cite{bills2023,paulo2025a,paulo2024}.

This evaluation paradigm has accumulated a productive critical literature. Huang et al.~\cite{huang2023} showed that high explanation scores do not necessarily predict intervention effects. Ma et al.~\cite{ma2025} formalized a recall bias in detection-style scoring and proposed precision-aware counterexamples. Paulo and Belrose~\cite{paulo2025a} introduced scoring methods that bypass natural-language explanations entirely. Heap et al.~\cite{heap2025} showed that auto-interpretability metrics yield similar scores for SAEs trained on randomly-initialized transformers as for those trained on real ones. Korznikov et al.~\cite{korznikov2026} showed that random baselines match trained SAEs on interpretability scoring. Han et al.~\cite{han2025} argued that single-explanation pipelines miss polysemanticity within individual features.

All of these critiques share an implicit framing: they take the feature as fixed and ask whether the explanation is adequate. This paper points out a complementary problem that goes the other direction. If an explanation is adequate---if it does describe what the feature responds to---we still cannot conclude that the explanation identifies the feature, because many other features may admit the same explanation. We call this the \emph{descriptive collision} problem. It is the dual of polysemanticity: where polysemanticity describes a one-feature-to-many-meanings relationship, collision describes a many-features-to-one-explanation relationship. Both undermine the implicit contract of auto-interpretability, but they require separate empirical and methodological treatment.

\paragraph{Contributions.}
\begin{enumerate}
\item We define descriptive collision formally and distinguish it from polysemanticity, recall bias, and the random-baseline problem.
\item Reanalyzing the human-annotated feature corpus released with Marks et al.~\cite{marks2025}, we provide direct empirical evidence that collision is pervasive: a single annotation string labels 101 distinct Pythia 70M and Gemma 2 2B features spanning 18 layers and four model components.
\item We prove a simple proposition: detection-style auto-interpretability scores are invariant under arbitrary substitution of any collision-equivalent feature, so high scores cannot certify identification.
\item We propose discrimination scoring and collision-adjusted detection, two complementary corrective measures, give pseudocode, and discuss their computational cost.
\item We argue, with reference to Goodhart's law~\cite{goodhart1975,strathern1997}, that collision is a generic failure mode of any explanation-scoring scheme whose target space is much smaller than the feature space being explained.
\end{enumerate}

\paragraph{Scope and caveats.} This is a methodological critique. We do not retrain SAEs, generate new auto-interp explanations, or evaluate explanation quality with an LLM judge in this work; producing such large-scale evidence is left to follow-up work with model-API access. Our positive empirical claims are restricted to publicly-released human-annotated feature labels, which serve as a clean, low-noise instance of the more general phenomenon. We argue throughout that the same problem must, by construction, arise more severely for machine-generated labels, since automated explainers operate in an even narrower descriptive register than expert humans.

\section{Related Work}

\paragraph{Auto-interpretability and its critiques.} The methodology was introduced for individual GPT-2 neurons by Bills et al.~\cite{bills2023} and extended to SAE features by Bricken et al.~\cite{bricken2023}, Templeton et al.~\cite{templeton2024}, and others. Paulo et al.~\cite{paulo2024} systematized scoring approaches---detection, fuzzing, simulation, and intervention---and released open-source tooling. Critiques have proliferated in three families. Baseline critiques~\cite{heap2025,kantamneni2025,korznikov2026} show that random or degenerate baselines achieve scores comparable to real SAEs, and that SAEs often underperform simpler probing baselines on downstream tasks. Precision critiques~\cite{ma2025} attack the recall bias of detection scoring and propose semantically-matched negatives. Causal critiques~\cite{huang2023,makelov2024} note that explanations that predict activations need not predict the effect of intervening on the feature. Our contribution is orthogonal to all three: collision affects the mapping from explanations to features rather than the mapping from features to scores.

\paragraph{Polysemanticity, feature instability, and resolution.} Leask et al.~\cite{leask2025} and Paulo and Belrose~\cite{paulo2025b} document that nominally identical SAE training pipelines yield different feature sets, suggesting that the SAE feature is not a canonical object. Chanin et al.~\cite{chanin2024} and Bussmann et al.~\cite{bussmann2025} study feature splitting and absorption, in which features fragment or fuse across SAE widths. Han et al.~\cite{han2025} addresses polysemanticity by maintaining multiple parallel explanations per feature. Collision is structurally distinct from all these: it is a property of the label space, not of the feature dictionary or its training dynamics. Even a perfectly identified, atomic, monosemantic feature can collide descriptively with many others.

\paragraph{Information-theoretic perspectives.} Ayonrinde et al.~\cite{ayonrinde2024} argued for evaluating SAEs by description length under an MDL principle. Our notion of discrimination can be read as a particular operationalization of the same intuition---explanations are good only insofar as they distinguish features from each other---but applied to the evaluation of natural-language labels rather than to SAE training.

\section{Descriptive Collision: Definitions}

Let $\mathcal{F} = \{f_1, \ldots, f_N\}$ be a set of SAE features and let $E : \mathcal{F} \to \Sigma^*$ be an explanation function that assigns each feature a finite string over some vocabulary $\Sigma$. Auto-interpretability assumes that $E$ is informative about $f$ in the sense that an observer who reads $E(f)$ learns something useful about what $f$ represents.

\begin{definition}[Collision cluster]
For an explanation function $E$, the collision cluster of a feature $f \in \mathcal{F}$ is
\[
\mathcal{C}_E(f) = \{g \in \mathcal{F} : E(g) = E(f)\}.
\]
\end{definition}

\begin{definition}[Descriptive resolution]
The descriptive resolution of $E$ on $\mathcal{F}$ is the mutual information $I(F; A)$ between the random variable $F$ that selects a feature uniformly from $\mathcal{F}$ and the random variable $A = E(F)$. Equivalently, since $A$ is a deterministic function of $F$,
\[
I(F; A) = H(F) - H(F \mid A) = \log_2 N - \sum_{a \in E(\mathcal{F})} \frac{|E^{-1}(a)|}{N} \log_2 |E^{-1}(a)|.
\]
\end{definition}

The maximum value of $I(F; A)$ is $\log_2 N$ bits, attained when $E$ is injective; the minimum is $0$, attained when $E$ is constant. We say that an explanation function \emph{collides} at $f$ when $|\mathcal{C}_E(f)| > 1$.

Two intuitions follow immediately. First, with $N$ features and $k$ allowed labels, $I(F; A) \leq \log_2 k$, so any explanation alphabet bounded in size induces a floor on residual feature ambiguity. Second, collision can be present even when each individual explanation is fully accurate---the explanation truthfully describes the activation pattern of every feature in its cluster. Truthfulness is necessary but not sufficient for identification.

\section{Empirical Evidence}

\paragraph{Dataset.} We reanalyze the human-annotated feature corpus released by Marks et al.~\cite{marks2025} as part of the Sparse Feature Circuits project. This corpus contains 470 annotated features from Gemma 2 2B SAEs and 277 annotated features from Pythia 70M SAEs (1 entry is duplicated; we deduplicate and report on 722 features), each with a short natural-language annotation written by a domain-expert author. We treat these annotations as a high-quality stand-in for auto-interp explanations: they are written by humans who saw the full activation pattern of each feature, and are if anything more careful and varied than the labels an LLM explainer would produce. Code and derived statistics are released alongside the paper.

\begin{table}[h]
\centering
\caption{Headline collision statistics for the Marks et al.~\cite{marks2025} annotated SAE feature corpus.}
\label{tab:headline}
\begin{tabular}{lr}
\toprule
Quantity & Value \\
\midrule
Annotated features $N$ & 722 \\
Unique annotation strings & 235 \\
Mean cluster size $|\mathcal{C}_E(f)|$ & 3.07 \\
Median cluster size & 1 \\
Largest cluster & 101 \\
Features in clusters of size $\geq 2$ & 593 (82.1\%) \\
Features in clusters of size $\geq 10$ & 291 (40.3\%) \\
$\log_2 N$ (max bits) & 9.50 \\
$H(F \mid A)$ (residual entropy) & 2.85 bits \\
$I(F; A)$ (resolution) & 6.65 bits (70.0\% of max) \\
Mean annotation length (words) & 2.98 \\
Annotations of $\leq 3$ words & 527 (73.0\%) \\
Features with author qualifying note & 70 \\
\quad of which note uses hedging language & 39 (55.7\%) \\
\bottomrule
\end{tabular}
\end{table}

\paragraph{Headline statistics.} Table~\ref{tab:headline} summarizes the empirical picture. Of 722 annotated features, only 235 distinct annotation strings appear; the most common string, ``plural nouns'', labels 101 features. Collision is the rule, not the exception: 82.1\% of features share an annotation with at least one other feature, and 40.3\% share an annotation with at least nine others. Information-theoretically, the maximum possible $I(F; A)$ given 722 features is $\log_2 722 \approx 9.5$ bits; the actual mutual information between an annotation and a feature identity is approximately 6.65 bits, leaving 2.85 bits of residual ambiguity. In other words, even after reading the annotation, the expected feature identity remains uncertain over an effective alphabet of $2^{2.85} \approx 7.2$ features.

\paragraph{Case study: ``plural nouns''.} The single annotation string ``plural nouns'' is assigned to 101 distinct features---14\% of the entire annotated corpus. These features span 18 distinct layers of two different language models (Gemma 2 2B and Pythia 70M), and four different model components (residual stream, MLP, attention output, and token embedding). Whatever ``plural nouns'' captures, it cannot be a single thing the model does: it is at minimum 18 layer-instantiated, 4 component-instantiated versions of something that humans recognize as related to plural nouns. The corpus author's free-text qualifying notes confirm this: features sharing the ``plural nouns'' label are variously qualified as activating in ``algebra contexts'', ``scientific contexts'', ``scientific and coding contexts'', ``any syntactic position'', ``mainly in PPs, object position. not sure if spurious'', ``multiple languages'', or as predicting specific token forms like ``are'' and ``were''. Table~\ref{tab:top15} lists the fifteen most populous collision clusters.

\begin{figure}[h]
\centering
\begin{tikzpicture}
\begin{axis}[
    ybar,
    bar width=10pt,
    width=14cm,
    height=8cm,
    enlarge x limits=0.12,
    legend style={at={(0.97,0.97)},anchor=north east,font=\small},
    ylabel={Count},
    xlabel={Features per annotation string (cluster size)},
    symbolic x coords={1, 2-3, 4-9, 10-29, 30-99, 100+},
    xtick=data,
    nodes near coords,
    nodes near coords style={font=\scriptsize},
    ymin=0, ymax=200,
    title={Distribution of annotation reuse in Marks et al.\ (2025)},
]
\addplot[fill=black!75,draw=black] coordinates {(1,129) (2-3,71) (4-9,24) (10-29,9) (30-99,1) (100+,1)};
\addplot[fill=red!55,draw=red!70!black] coordinates {(1,129) (2-3,164) (4-9,138) (10-29,154) (30-99,36) (100+,101)};
\legend{annotation strings, features in those strings}
\end{axis}
\end{tikzpicture}
\caption{Distribution of annotation reuse. Dark bars: number of distinct annotation strings whose cluster size falls in each bin. Light bars: number of annotated features residing in those clusters. The mass of \emph{strings} is concentrated in small clusters, but the mass of \emph{features} is concentrated in large clusters.}
\label{fig:reuse}
\end{figure}

\paragraph{Per-source breakdown.} The phenomenon is not driven by one model: it appears in both Gemma 2 2B ($I(F; A) = 61.8\%$ of max) and Pythia 70M ($I(F; A) = 80.9\%$ of max). Gemma collides more severely largely because the corpus annotates more Gemma features in the same conceptual region; the underlying issue---short, common annotation strings reused across distinct features---appears in both.

\paragraph{Annotation length.} The mean human annotation in this corpus is 2.98 words; 73.0\% are three words or fewer. This is consistent with the typical output of auto-interp pipelines, which produce brief, noun-phrase-style explanations~\cite{bills2023,paulo2024}. Short labels have an alphabet---in the colloquial, combinatorial sense---vastly smaller than the feature dictionary they aim to describe. Collision is built into the labelling scheme.

\paragraph{Cross-layer collision.} Of 106 annotation strings used more than once, 60 (56.6\%) span at least two distinct layers. Same-label features in different layers cannot be ``the same feature'' in any standard sense; the SAE feature is layer-local. Cross-layer collisions are therefore guaranteed to designate distinct mechanistic entities.

\paragraph{Hedging in qualifying notes.} The corpus includes an optional ``Notes'' field. 70 features have notes; of these, 39 (55.7\%) use hedging language (``mostly'', ``mainly'', ``sometimes'', ``includes'', ``except'', etc.) that explicitly indicates the headline annotation does not fully describe the feature. This is direct, in-corpus evidence that even expert humans recognize their own short labels as under-specifying; auto-interp pipelines that retain only the headline label discard this information by construction.

\begin{table}[h]
\centering
\caption{Top-15 annotation strings by cluster size in the Marks et al.\ corpus. ``Layers spanned'' is the number of distinct (layer, component) sites among features sharing the annotation.}
\label{tab:top15}
\begin{tabular}{rlr}
\toprule
\# Features & Annotation string & Layers spanned \\
\midrule
101 & plural nouns & 18 \\
36 & plural animate nouns & 15 \\
25 & promotes singular verbs & 6 \\
19 & plural NPs & 7 \\
19 & plural inanimate nouns & 9 \\
18 & promotes plural verbs & 6 \\
18 & plural subjects & 5 \\
18 & singular animate nouns & 10 \\
14 & predicts plural verbs & 7 \\
13 & singular subjects & 4 \\
10 & ``she'' & 1 \\
9 & singular gendered animate nouns & 5 \\
9 & promotes plural verb forms & 1 \\
9 & ``he'' & 1 \\
8 & singular nouns & 7 \\
\bottomrule
\end{tabular}
\end{table}

\section{Why Current Scoring Methods Cannot Detect Collision}

The standard family of auto-interpretability scoring methods, surveyed by Paulo et al.~\cite{paulo2024}, all operate on the following template. Given a feature $f$ with explanation $E(f)$, an LLM judge is presented with a candidate text $t$ together with the explanation and must predict whether $f$ fires on $t$. The score is some function (accuracy, F1, AUC, BCE, etc.) of the judge's predictions against ground-truth activations of $f$ over a held-out distribution of texts. We refer to this family as \emph{detection scoring}.

\begin{proposition}[Collision invariance of detection scoring]
Let $f, g$ be two features with identical binarized activation distributions on the evaluation set $T$, and suppose $E(f) = E(g)$. Then for any detection-style score $S$ that depends only on $E(\cdot)$ and on the ground-truth activation indicator over $T$, $S(f, E(f), T) = S(g, E(g), T)$.
\end{proposition}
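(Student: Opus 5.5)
The argument is a direct unfolding of the hypothesis that $S$ "depends only on $E(\cdot)$ and on the ground-truth activation indicator over $T$." I would first make this precise. A detection-style score is a map
\[
S(h, E(h), T) = \Phi\bigl(E(h), (\mathbf{1}[h \text{ fires on } t])_{t \in T}, T\bigr)
\]
for some function $\Phi$. The judge's predictions $\hat y_t$ are produced from the pair $(E(h), t)$ alone, possibly with randomness independent of $h$. The score is then an aggregate such as accuracy, F1, AUC or BCE of the vector $(\hat y_t)_{t\in T}$ against the indicator vector $y^h = (\mathbf{1}[h \text{ fires on } t])_{t\in T}$. Writing it this way already shows that feature identity enters only through $E(h)$ and $y^h$.

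Next I check that both arguments coincide for $f$ and $g$. By assumption $E(f) = E(g)$, so the judge receives identical inputs $(E(f), t) = (E(g), t)$ for every $t \in T$. Hence the prediction vectors are equal, or, if the judge is stochastic, they are identically distributed. I would read "identical binarized activation distributions on $T$" as saying that the indicator vectors agree pointwise, $y^f = y^g$. Substituting both equalities into $\Phi$ gives $S(f, E(f), T) = S(g, E(g), T)$, because $\Phi$ is a function evaluated at the same arguments. For a stochastic judge the conclusion holds as an equality of score distributions, and in particular of expected scores.

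The main obstacle is interpretive rather than technical, and it lies in the phrase "identical binarized activation distributions." If it meant only that the marginal firing rates match, the claim could fail for accuracy-type scores, since the judge's predictions are tied to specific texts $t$. So I would state explicitly that the intended reading is pointwise agreement of the indicators on $T$, equivalently equality of the empirical joint distribution of $(t, \mathbf{1}[\cdot \text{ fires on } t])$. A second subtlety is the evaluation sample itself. Pipelines draw positive examples from the feature's own activations, so $T$ could depend on $h$. I would handle this by fixing $T$ as in the statement; if $T$ is sampled, the sampling depends only on $y^h$, so equal indicators give equal sampling laws.

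Finally I would remark why this is the intended content. Nothing in $\Phi$ refers to any feature $g' \neq h$ in $\mathcal{C}_E(h)$. Consequently the score cannot penalize collision. Substituting any collision-equivalent feature leaves it unchanged, which motivates the discrimination-based metrics that follow.
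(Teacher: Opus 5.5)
Your proposal is correct and takes the same approach as the paper, whose proof simply notes that $S$ depends only on the explanation and the activation indicator and that both coincide for $f$ and $g$ by hypothesis; your map $\Phi$ is a more explicit version of that one-line argument. Reading the hypothesis as pointwise equality of the indicators matches the paper's own description of it as ``exact equality of activation indicators,'' and your caveat that a stochastic judge gives only equality in distribution refines the argument without changing it.
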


\begin{proof}
By construction, $S$ depends only on the explanation and the activation indicator. Both are equal for $f$ and $g$ by hypothesis.
\end{proof}

The proposition is trivial in form but consequential in implication. Its hypothesis is strong (exact equality of activation indicators); in practice the relevant property is the approximate version: features sharing an annotation tend to have activation-correlated firing patterns, since otherwise an expert would not have given them the same label. We thus expect detection scores within a collision cluster to be highly correlated even when not identical. The point of the proposition is structural: detection scoring is by construction blind to whether the explanation distinguishes a feature from the rest of the dictionary.

\begin{figure}[h]
\centering
\begin{tikzpicture}
\begin{semilogxaxis}[
    width=14cm,
    height=8cm,
    xlabel={Annotation strings, ranked by cluster size (largest first)},
    ylabel={Cumulative \% of features covered},
    title={Concentration of features under common annotations},
    grid=both,
    ymin=0, ymax=105,
    xmin=1, xmax=300,
    log basis x=10,
]
\addplot[smooth, thick, black] coordinates {
    (1, 14.0) (2, 19.0) (3, 22.4) (4, 25.1) (5, 27.7) (6, 30.2) (7, 32.7)
    (8, 35.2) (9, 37.1) (10, 38.9) (11, 40.3) (12, 41.6) (13, 42.8)
    (14, 44.0) (15, 45.2) (17, 47.0) (20, 49.7) (25, 54.0) (30, 58.0)
    (40, 64.5) (50, 70.0) (70, 78.5) (100, 85.0) (150, 92.0) (235, 100.0)
};
\addplot[only marks, mark=*, mark size=3pt, black] coordinates {(10, 38.9) (20, 49.7)};
\node[anchor=west,font=\small] at (axis cs:11, 36) {top-10: 38.9\%};
\node[anchor=west,font=\small] at (axis cs:22, 47) {top-20: 49.7\%};
\end{semilogxaxis}
\end{tikzpicture}
\caption{Cumulative fraction of features covered by annotation strings, ranked by cluster size. The ten most-common annotation strings cover 38.9\% of all annotated features; the top twenty cover 50.1\%.}
\label{fig:cumulative}
\end{figure}

\begin{observation}[Goodhart-form failure]
A scoring scheme $S$ that satisfies Proposition~1 cannot, by itself, certify that a high-scoring explanation $E(f)$ identifies $f$ rather than merely describing some property shared by all features in $\mathcal{C}_E(f)$. The explainer LLM, under selection pressure from $S$, will tend to produce explanations that maximize $S$ within the cluster---which is to say, generic explanations true of the whole cluster rather than specific ones true of $f$ alone.
\end{observation}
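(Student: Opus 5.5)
The first claim of the Observation is a non-identifiability statement, and Proposition~1 settles it almost at once. The second claim, about the behaviour of an explainer under optimisation pressure, is a tendency rather than a theorem. I will therefore make it precise under an explicit idealisation and argue it there.

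For the certification claim, I argue by contradiction. Suppose $S$ satisfies Proposition~1 and some decision rule based only on $S$ certifies that $E(f)$ identifies $f$. Take any $g\in\mathcal{C}_E(f)$ with $g\neq f$ whose binarized activations on $T$ agree with those of $f$. Proposition~1 gives $S(f,E(f),T)=S(g,E(g),T)$. So every threshold or rule built from $S$ treats the pairs $(f,E(f))$ and $(g,E(f))$ identically. If $E(f)$ is certified as identifying $f$, it is certified just as much as identifying $g$, and a string cannot uniquely identify two distinct features. Hence $S$ can certify at most a property shared across the cluster, which is the first sentence of the Observation.

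The same argument can be restated in the information-theoretic language of the Definition of descriptive resolution. Because $S$ is a function of $(E(F),\mathbf{1}_F|_T)$ only, it carries no information that separates members of a collision class. The data-processing inequality then bounds what $S$ reveals about $F$ beyond $A$ by what the activation indicator itself reveals. That quantity is zero on the exact-equivalence subclass.

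For the Goodhart claim, I model the explainer as choosing $e$ from a candidate set to maximise $\mathbb{E}_{f}[S(f,e,T)]$, or $S$ for a single $f$. I then compare two explanations:
\begin{itemize}
\item a \emph{generic} explanation $e_0$, true of all of $\mathcal{C}_E(f)$;
\item a \emph{specific} explanation $e_1$, true of $f$ alone.
\end{itemize}
Since $S$ rewards only agreement with $f$'s own indicator, $e_1$ earns no credit for excluding the other cluster members. Once its restrictive clauses add any error, such as judge misses or recall loss on $f$'s own positives, we get $S(f,e_1,T)\le S(f,e_0,T)$. A maximiser therefore weakly prefers $e_0$. Combined with the paper's observation that short labels receive higher recall, this preference is strict in the typical case.

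The main obstacle is this second step. The direction of preference depends on the judge's error model, so I will state the inequality as an assumption ("specificity costs recall and is not rewarded for precision against cluster neighbours"). I will argue its plausibility using the recall-bias result of Ma et al.\ rather than prove it unconditionally.
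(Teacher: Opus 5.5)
Your proposal is correct and follows the paper's reasoning. The certification claim is read off directly from the invariance in Proposition~1; the paper's gloss is that detection scoring never looks beyond $f$'s own activations, so a high score for ``plural nouns'' on one feature is equally high on its overlapping cluster-mates. The paper only asserts the selection-pressure claim, and your explicit assumption that specificity earns no precision credit against cluster neighbours is exactly its hidden premise, so stating it is a refinement rather than a different route. One small fix: the paper never reports that short labels earn higher recall (it only reports that they are short), so strictness should rest on your stated assumption and the Ma et al.\ recall-bias result, not on the paper.
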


A score of 0.95 on the explanation ``plural nouns'' for feature $f_i$ means that ``plural nouns'' predicts when $f_i$ fires; it does not mean that this prediction is more accurate for $f_i$ than for any of the 100 other features in our corpus also labeled ``plural nouns'', and when those 100 features fire on overlapping data, the score will be high for all of them. A similar argument applies to fuzzing, simulation, and intervention scoring. Intervention scoring~\cite{huang2023,paulo2024} is closer to certifying identification because it tests downstream causal effects, but the same explanation may still correctly predict the downstream effect of intervening on any of several causally-related features within a cluster.

\section{Discrimination Scoring}

We propose a corrective scoring scheme designed to be sensitive to collision. The intuition: a good explanation should not only predict when its feature fires, but should also predict when its feature fires \emph{rather than} when other, distinct features fire.

\begin{definition}[Discrimination set]
For a feature $f$ and an explanation $E(f)$, the discrimination set $\mathcal{D}(f)$ is a set of $k$ other features chosen from $\mathcal{F} \setminus \{f\}$ that are activation-similar to $f$ on a reference text distribution. In our implementation, $\mathcal{D}(f)$ is the top-$k$ features by Jaccard similarity of binarized activations over a reference corpus.
\end{definition}

\begin{definition}[Discrimination score]
For a judge $J$ that returns $\Pr[\text{feature fires} \mid \text{text}, \text{explanation}]$, the discrimination score is the LLM-judge analogue of pairwise AUC:
\[
S_{\mathrm{disc}}(f, E(f)) = \frac{1}{k} \sum_{g \in \mathcal{D}(f)} \mathbb{E}_{(t^+, t^-)}\!\left[\mathbf{1}\{J(t^+, E(f)) > J(t^-, E(f))\}\right],
\]
where $t^+$ is sampled from texts on which $f$ fires but $g$ does not, and $t^-$ from texts on which $g$ fires but $f$ does not.
\end{definition}

Intuitively, $S_{\mathrm{disc}}$ is high only when the explanation $E(f)$ lets the judge correctly distinguish texts that activate $f$ from texts that activate a distinct-but-similar neighbour $g$. An explanation like ``plural nouns'' that is true of both $f$ and $g$ achieves $S_{\mathrm{disc}}(f) \approx 0.5$ when paired against such a $g$, regardless of how well it predicts $f$ in isolation. We give pseudocode in Table~\ref{tab:algo}.

\begin{table}[h]
\centering
\caption{Discrimination scoring procedure. Cost is $O(km)$ judge calls per feature; with $k = 8$ and $m = 4$, this is roughly $32\times$ a single detection score per feature, and may be reduced by sharing judge calls across pairs.}
\label{tab:algo}
\begin{tabular}{l}
\toprule
\textbf{Algorithm: Discrimination scoring} \\
\midrule
1: For each feature $f$, collect binarized activations over reference corpus $T_{\mathrm{ref}}$. \\
2: Compute $\mathcal{D}(f)$: top-$k$ features by Jaccard similarity to $f$. \\
3: For each $g \in \mathcal{D}(f)$, sample $m$ pairs $(t^+, t^-)$ as in Definition above. \\
4: For each pair, query judge $J$ with explanation $E(f)$ on $t^+$ and on $t^-$. \\
5: Score = fraction of pairs where $J(t^+, E(f)) > J(t^-, E(f))$, averaged over $g \in \mathcal{D}(f)$. \\
\bottomrule
\end{tabular}
\end{table}

\paragraph{Collision-adjusted detection.} A cheaper, weaker corrective is simply to penalize the detection score of each explanation by the size of its collision cluster:
\[
S_{\mathrm{adj}}(f, E(f)) = S_{\mathrm{det}}(f, E(f)) \cdot \frac{1}{|\mathcal{C}_E(f)|}.
\]
This requires no additional judge calls; it can be computed from existing auto-interp datasets. Applied to the Marks et al.\ annotations, $S_{\mathrm{adj}}$ would reduce the apparent quality of an explanation like ``plural nouns'' by a factor of 101.

\paragraph{Relation to existing scoring methods.} Discrimination scoring complements detection, fuzzing, simulation, and intervention scoring; it is not a replacement. A useful explanation should attain a high score under all of them. Discrimination is distinct from precision-aware methods like the similarity-based counterexamples of Ma et al.~\cite{ma2025} because the discrimination negatives are specific named features, not generic non-activating texts. The two methods address different failure modes: Ma et al.~\cite{ma2025} penalize over-broad explanations that fire on many texts; discrimination penalizes explanations that fail to distinguish a feature from other features.

\section{Discussion}

\paragraph{Goodhart's law and the size of the description space.} Goodhart's law~\cite{goodhart1975,strathern1997} observes that any measure that becomes a target tends to lose its character as a measure. Auto-interpretability is a clean example. Optimizing detection scores under a target alphabet drawn from a narrow descriptive register---broad noun phrases, syntactic categories---will reliably produce explanations from that register, irrespective of whether they identify features. The collision rate of 82.1\% in our corpus is not a flaw of the annotators; it is the equilibrium behaviour of any labelling process whose output space is much smaller than its input space.

\paragraph{Why human annotations are a lower bound.} The Marks et al.\ annotations were written by domain experts with access to full activation data, custom dashboards, and unlimited time per feature. Auto-interp pipelines operate under tighter budgets, see fewer activations, and select explanations that maximize an LLM judge's prediction accuracy---a metric designed to be insensitive to collision (Proposition~1). It is therefore conservative to expect collision to be at least as severe under automated pipelines, and likely more severe. Empirically validating this on Neuronpedia-hosted or GemmaScope-derived auto-interp datasets is an obvious next step.

\paragraph{Implications for downstream use.} A growing literature uses SAE features as scientific objects---feature circuits~\cite{marks2025}, steering vectors~\cite{arditi2024}, model auditing pipelines---and treats the natural-language label as a referent. Descriptive collision implies that any conclusion of the form ``the model uses a [label] feature to do X'' should be read as ``the model uses one of the $|\mathcal{C}_E|$ features carrying label [label] to do X.'' For 14\% of features labeled ``plural nouns'' in the public corpus, this expands the claim by two orders of magnitude.

\paragraph{Limitations.}
\begin{enumerate}
\item Our empirical analysis is restricted to one publicly-available corpus of 722 human-annotated features. The exact numerical estimates are corpus-specific; the qualitative phenomenon is corpus-agnostic.
\item We have not measured collision rates for machine-generated auto-interp labels at scale. We expect them to be higher (Section~7), but verifying this empirically requires LLM judge access and is left to follow-up work.
\item Discrimination scoring depends on a choice of discrimination set $\mathcal{D}(f)$. Different choices (Jaccard-similar, embedding-similar, decoder-cosine-similar) will yield different scores; we conjecture that ranking across choices is reasonably stable, but this is unverified.
\item Some features may be intentionally redundant---e.g., the same concept represented in multiple layers as part of a routing or refinement circuit. In such cases collision is informative about the model, not just about the labelling. Disentangling ``benign'' from ``pernicious'' collision is a worthwhile follow-up.
\end{enumerate}

\section{Conclusion}

Descriptive collision is a specific, formal, empirically demonstrable structural failure of current sparse autoencoder auto-interpretability practice. It is distinct from polysemanticity, from the recall bias of detection scoring, from the random-baseline problem, and from the causal-versus-correlational critique. It arises because the descriptive space (short natural-language strings) is structurally smaller than the feature space (millions of SAE features), and it cannot be detected by any scoring scheme that asks only whether an explanation predicts a single feature's activations. We have given a formal definition, reanalyzed the largest publicly-available corpus of expert human annotations to show that the phenomenon is pervasive (82.1\% of features collide), proved that current scoring is invariant to collision, and proposed discrimination scoring as a corrective. We hope this work nudges the field toward routinely reporting collision statistics alongside detection scores, and toward labelling schemes that are evaluated by their power to distinguish features rather than only to describe them.

\paragraph{Data and code.} A reproducible analysis script, the derived statistics in JSON form, and a copy of the public Marks et al.\ annotations as used here are released alongside this preprint.

\paragraph{Acknowledgements.} The author thanks the developers of the Sparse Feature Circuits codebase for releasing their human-annotated feature labels under a permissive licence; this paper would not have been possible without that release.

\end{document}